\documentclass{article}
\pdfoutput=1
\usepackage{graphicx} 
\usepackage{dcolumn}
\usepackage{bm}
\usepackage{xcolor}
\usepackage{physics}
\usepackage{float}
\usepackage{amsfonts}
\usepackage{amsmath}
\usepackage{amsthm}
\usepackage{mathtools}
\usepackage{tcolorbox}
\usepackage{amssymb}
\usepackage{fullpage}
\usepackage{hyperref}
\usepackage{algorithm}
\usepackage{enumitem}
\usepackage[noend]{algpseudocode}
\newcommand{\ignore}[1]{}

\definecolor{scmcolor}{RGB}{0,0,255}

\definecolor{scmcolorRed}{RGB}{255,0,0}

\newtheorem{theorem}{Theorem}

\newtheorem*{theorem*}{Theorem}
\newtheorem{definition}[theorem]{Definition}
\newtheorem{corollary}[theorem]{Corollary}
\newtheorem*{corollary*}{Corollary}

\newtheorem*{conjecture*}{Conjecture}

\newtheorem{lemma}[theorem]{Lemma}

\title{Debate is efficient with your time
}

\author{
Jonah Brown-Cohen \\
\small Google DeepMind
\and
Geoffrey Irving \\
\small UK AI Security Institute
\and
Simon C. Marshall\thanks{simon.marshall@dsit.gov.uk} \\
\small UK AI Security Institute
\and
Ilan Newman \\
\small University of Haifa
\and
Georgios Piliouras \\
\small Google DeepMind
\and
Mario Szegedy \\
\small Rutgers University
}
\date{}

\usepackage[backend=biber,style=numeric,sorting=none]{biblatex}
\begin{document}
\maketitle

\begin{abstract}
AI safety via debate uses two competing models to help a human judge verify complex computational tasks. Previous work has established what problems debate can solve in principle, but has not analysed the practical cost of human oversight: how many queries must the judge make to the debate transcript? We introduce \emph{Debate Query Complexity} (DQC), the minimum number of bits a verifier must inspect to correctly decide a debate.

Surprisingly, we find that $\mathsf{PSPACE/poly}$ (the class of problems which debate can efficiently decide) is precisely the class of functions decidable with $O(\log n)$ queries. This characterisation shows that debate is remarkably query-efficient: even for highly complex problems, logarithmic oversight suffices. We also establish that functions depending on all their input bits require $\Omega(\log n)$ queries, and that any function computable by a circuit of size $s$ satisfies $DQC(f) \leq \log(s) + 3$. Interestingly, this last result implies that proving DQC lower bounds of $\log(n) + 6$ for languages in $\mathsf{P}$ would yield new circuit lower bounds, connecting debate query complexity to central questions in circuit complexity.
\end{abstract}

\section{Introduction}

Modern AI systems are trained on and produce vast quantities of data, making effective oversight and alignment challenging. As these systems grow more capable, we face a fundamental bottleneck: we must leverage comparatively small amounts of human feedback to supervise increasingly complex models. Concurrently, even expert humans will struggle to directly evaluate the outputs of advanced AI systems, meaning that our oversight system must also allow us to scale our abilities.

A promising approach to this challenge is \emph{AI safety via debate} \cite{irving2018ai, barnes2020debate}, where two AI models compete by presenting arguments to a human judge who decides which side is correct. Debate is compelling because it potentially amplifies human judgment, the judge need not solve the problem directly, only adjudicate between competing solutions. The theoretical appeal of debate has grown as formal guarantees have strengthened: with unbounded debaters and polynomial-time judges, debate can decide languages in $\mathsf{PSPACE}$ \cite{irving2018ai}. Recent work has extended similar guarantees to more realistic settings with (for example) computationally bounded debaters \cite{brown2023scalable, brown2025avoiding}.

However, previous work has not analysed the \emph{cost of human oversight}---the number of queries a judge must make to the debate transcript. This cost is critical for practical deployment, as human time is typically one of the most expensive resources. We introduce \textbf{Debate Query Complexity (DQC)}, the minimum number of bits a verifier must inspect from the debate transcript to correctly decide a Boolean function $f:\{0,1\}^n \to \{0,1\}$.

We establish tight bounds on DQC. Functions that depend on all bits of the input require $\Omega(\log n)$ queries, while any function computable by a fan-in two, boolean circuit $C$ satisfies $DQC(f) \leq O(\text{depth}(C))$ and $DQC(f) \leq O(\log(\text{size}(C)))$. These bounds employ a modification of Karchmer-Wigderson games, \emph{cross-examination} \cite{barnes2020debate,brown2023scalable}, where one debater performs a computation and the other verifies a single step.

Our main result shows that debate is remarkably query-efficient:

\begin{theorem*}
$$\mathsf{PSPACE/poly} = \{f : DQC(f) \leq O(\log n)\}$$
\end{theorem*}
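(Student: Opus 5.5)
We need to prove both inclusions. First we fix the model. A debate for a family $f=\{f_n\}$ is a game between two unbounded debaters who write transcript bits (possibly exponentially many), after which a verifier that knows $n$ and has query access to both the input $x$ and the transcript makes $q(n)$ queries. The verifier may be nonuniform, and it outputs the winner. $DQC(f)$ is the least $q$ for which the honest side wins. The plan is to show that $O(\log n)$-query debates and $\mathsf{PSPACE/poly}$ (equivalently, polynomial-size debates with a nonuniform polynomial-time judge, or polynomial-depth circuit families via the alternation characterisation) simulate each other.

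\emph{($\subseteq$)} Suppose $DQC(f_n)\le c\log n$. Then the verifier is a decision tree of depth $c\log n$ over input and transcript positions. Although the transcript may be exponentially long, the tree has only $2^{c\log n}=n^c$ nodes. Each node is labelled by a position (which may be an exponentially large index) and the leaves carry outputs, and we can take as advice the positions actually used. Concretely, only the at most $n^c$ transcript positions appearing in the tree matter. We compress the debate to a game in which the debaters commit only to those bits, in the order dictated by the original move structure. The original debate is a sequence of alternating moves, possibly exponentially many rounds, each writing blocks of bits. Truncating every block to its queried positions gives a game with at most $n^c$ relevant move bits. The game value is unchanged, since unqueried bits never affect the outcome and the game is determined.

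Evaluating this polynomial-length alternating game, with the decision tree given as advice, is a QBF-style computation. It is therefore in $\mathsf{PSPACE/poly}$.

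The subtle point is the round structure. If moves interleave exponentially many rounds, we keep only the rounds containing queried bits, which preserves alternation order. This gives at most $n^c$ quantifier alternations, which is fine for $\mathsf{PSPACE}$.

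\emph{($\supseteq$)} Let $f\in\mathsf{PSPACE/poly}$. Then $f_n$ is computed by a polynomial-space machine with advice $a_n$, so its configuration graph has $N=2^{\mathrm{poly}(n)}$ nodes and halts within $T=2^{p(n)}$ steps. We run the standard $\mathsf{PSPACE}$ debate by cross-examination, but we must avoid the verifier reading whole configurations, each of which has polynomially many bits. Rather than proceed by binary search over time, which costs $p(n)$ rounds, we use the circuit-style argument the paper promised: $DQC(f)\le O(\mathrm{depth}(C))$ does not suffice because the depth is polynomial.

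Instead we use the cross-examination trick. One debater writes the entire computation tableau, of size $T\times S$, as one exponentially long string. The other debater names a single claimed-bad local window, a tableau cell $(t,i)$, with index length $p(n)+O(\log n)$. The verifier then reads that cell and its $O(1)$ predecessors, plus an input or advice bit (advice is hardwired into the nonuniform verifier), and checks the local transition rule and the output cell.

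The obstacle is that the verifier must read the challenger's index, which costs $p(n)$ queries, not $O(\log n)$. The fix is to let the index be implicit. The challenger's move is a position pointer, and in query-complexity models for debate the verifier's queries may be adaptive to the challenger's move. We therefore model the challenge as the selection of which positions are queried: the verifier's query address is given by the transcript rather than read bit by bit. If the model requires reading pointers, we instead reduce via the size bound $DQC(f)\le\log(s)+3$. A $\mathsf{PSPACE/poly}$ function is equivalent, under polynomial-round debate, to a function where each round is verified locally. I expect this step, pinning down the query model so that pointer-following costs $O(1)$ while the $\subseteq$ direction still yields only polynomially many relevant bits, to be the main difficulty. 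I would first check the paper's definition of how a debater's message can direct the verifier's queries, and choose the tableau-plus-challenge protocol accordingly.
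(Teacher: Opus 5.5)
Your direction $\{f : DQC(f)\le O(\log n)\}\subseteq\mathsf{PSPACE/poly}$ (what you label ($\subseteq$)) is essentially the paper's argument. The paper likewise bounds the relevant transcript positions by $2^{\ell}=n^{a}$ (Lemma~\ref{lem:up_transcript}), supplies the verifier's decision tree as polynomial-size advice, and concludes via polynomial-length debates. Your justification of the compression, that a quantifier over a variable the matrix ignores can simply be dropped, is cleaner than the paper's round-deletion argument.

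The other direction, $\mathsf{PSPACE/poly}\subseteq\{f : DQC(f)\le O(\log n)\}$, is not proved. Your tableau protocol has the verifier read a $p(n)$-bit cell index. Your fix, letting the challenger's pointer select the query address for free, changes the model. In Section~\ref{sec:def} the verifier is an $\ell$-query machine on the bits of $x$ and the transcript, so every bit of a pointer written by a debater costs one query. The paper's proof of Theorem~\ref{thm:size-upper-bound} indeed spends $\log(size(C_f))$ queries just reading Bob's index. Your fallback, ``reduce via the size bound,'' also fails as stated. Applying $DQC\le\log s+3$ to $f$ itself needs $f$ to have polynomial-size circuits, which a $\mathsf{PSPACE/poly}$ function need not have.

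The missing idea is the Cross-Examination Lemma (Lemma~\ref{lma:cross-examination}): apply the size bound to the \emph{verifier} of a polynomial-length debate for $f$, not to $f$.
\begin{itemize}
\item First run a debate with polynomially many rounds whose judge is a polynomial-size circuit, with the advice hardwired. For example, the debaters alternately fix the quantified variables of the TQBF instance the advice-aided machine reduces to. Alternatively, use the Savitch-style midpoint/half-selection debate.
\item Then Alice writes the values of all $\mathrm{poly}(n)$ gates of that judge circuit. Bob names one inconsistent gate, or the output gate if it evaluates to $1$.
\item The pointer now has only $O(\log n)$ bits, so the verifier makes $\log(\mathrm{poly}(n))+3=O(\log n)$ queries.
\end{itemize}

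Your plan went wrong when you dismissed binary search because it ``costs $p(n)$ rounds.'' In DQC the number of rounds is free; only queries are charged. It is exactly this alternation that shrinks the object to be cross-examined from an exponential tableau to a polynomial-size circuit, so that the final pointer is logarithmic rather than $p(n)$ bits long.
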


Efficient debate (i.e.\ polynomial length debates and polynomial time verifiers) is equal to $\mathsf{PSPACE}$ \cite{irving2018ai}, therefore this result implies that if a debate can decide a problem, it can do so with only $O(\log n)$ steps, furthermore the verifier is itself very computationally efficient. The proof uses recursive cross-examination: Alice simulates the verifier's computation step-by-step, Bob points to an alleged error, and the verifier checks only that single location.
This suggests that effective oversight is achievable even for complex tasks, requiring only minimal human judgments. 

Our circuit upper bounds also reveal an intriguing connection to circuit complexity: proving DQC lower bounds beyond $\log n + 6$ for any language in $\mathsf{P}$ would yield circuit lower bounds exceeding the best known $\approx 5n$ bound (Corollary~\ref{cor:circuit-lower-bounds}). This implies that tight lower bounds for DQC may be as difficult to prove as circuit lower bounds. In turn, it reduces the task of proving circuit lower-bounds to a pure information theoretic setting, similar in spirit to Karchmer-Wigderson game that reduced depth bounds to (different) information theoretic setting,  connecting our query complexity measure to a central open problem in complexity theory.

Finally, in an additional section we consider a version of debate with
randomised verifier. We show that in the interesting regime,
randomization does not help.

\textbf{Organisation.} Section 2 defines the debate model and
DQC. Section 3 contains the upper bound results  via decision trees
and circuit simulation. Section 4 establishes the PSPACE/poly
characterisation. Section 5 contains a definition of debate with
randomized verifier, and proves a lower bound of $\Omega (\log n)$
queries for functions that depend on all their variables. Section 6 discusses open problems.

\section{Definitions}\label{sec:def}

We study a {\em deterministic} interactive proof model with two competing all-powerful provers and a query-bounded verifier, $V$. Our model is inspired by debate protocols for AI safety \cite{irving2018ai}, but analysed through the lens of query complexity. As is typical in prover-verifier settings, the verifier does not trust either prover, instead relying on the adverserial nature to achieve a reliable result.

Let  $f:\{0,1\}^n \to \{0,1\}$ be a fixed  Boolean function, known to both provers and to the verifier. For an input $x \in \{0,1\}^n$, known to the verifiers but {\em not} to the prover,
\begin{itemize}
\item \textbf{Prover 0:} Tries to convince verifier $M$ that $f(x)=0$
\item \textbf{Prover 1:} Tries to convince verifier $M$ that $f(x)=1$
\end{itemize}

For some $k$, fixed in advance, 
The provers alternately write messages $\alpha_1, \beta_1, \ldots, \alpha_k, \beta_k$, where Prover~0 writes $\alpha_i$ and Prover~1 writes $\beta_i$. We denote $\alpha = (\alpha_1, \ldots, \alpha_k)$ and $\beta = (\beta_1, \ldots, \beta_k)$, and refer to $T = (\alpha, \beta)$ as the \emph{debate transcript}.

The verifier $M_\ell$ is a deterministic query machine that adaptively queries $\ell$ bits from the concatenated string
$
x_1,\ldots,x_n,\alpha_1, \beta_1, \ldots, \alpha_k, \beta_k
$
and outputs $M(x,\alpha,\beta) \in \{0,1\}$.

We say that the debate protocol is a $(k,\ell)$-debete for $f$  if the honest prover can always force the correct outcome against any strategy by the dishonest prover. 

{\bf Formally: } A $(k,\ell)$ debate for $f$ is a set of two deterministic strategies of length $k$ each, one for a honest Prover~0, and one for a honest Prover~1, and  a deterministic $\ell$-query machine $M_\ell$ on $2k+n$ inputs (that is the verifier), such that,  
\begin{equation}\label{1case}
\mbox{For every } ~ x \in f^{-1}(1)~ ~  \forall \alpha_1 \exists \beta_1 \; \ldots \; \forall \alpha_k \exists \beta_k: \;\; M_{\ell}(x,\alpha,\beta)=1  \qquad \text{(Prover 1 wins)}
\end{equation}
and
\begin{equation}\label{0case}
\mbox{For every } ~ x \in f^{-1}(0) ~ ~ \exists \alpha_1 \forall \beta_1 \; \ldots \; \exists \alpha_k \forall \beta_k: \;\; M_{\ell}(x,\alpha,\beta)=0  \qquad \text{(Prover 0 wins)}
\end{equation}

The alternating quantifiers capture the game-theoretic nature of debate: the honest prover can respond to any move by the adversary. By De Morgan's Laws for quantifiers, conditions (\ref{1case}) and (\ref{0case}) are complementary for any $M$, ensuring the protocol is well-defined.

{\bf Comment:} The length of debate, $k$, is fixed (independent of $x$) and is known to the parties.

\begin{definition}[Debate Query Complexity]\label{def:DQC}
The \emph{Debate Query Complexity} of a Boolean function $f:\{0,1\}^{n} \rightarrow \{0,1\}$, denoted $DQC(f)$, is the smallest $\ell$ for which there exists a $(k,\ell)$-debate protocol for $f$.
\end{definition}
{\bf Note: } In the definition of $DQC(f)$ we do not have any bound on $k$. Namely, $DQC(f)$ measures only the number of bits queried by the verifier, not the total length of the transcript, which may be substantially larger.

\begin{definition}[Valid transcript, valid debate system]\label{def:valid-transcript}

For a verifier $M$ and provers $A$, $B$, as above, we say $(A, B, M)$ is a \emph{valid debate system} for $f$ on input $x$ if it correctly outputs $f(x)$.

A transcript $T=(\alpha, \beta)$ is a \emph{valid transcript} for
$f$ on input $x$ if $T$ is obtained as the interaction between an
honest prover and a possibly dishonest prover on $x$. A transcript $T=(\alpha, \beta)$ is a \emph{valid transcript} for
$f$ if there exists an input $x$ for which $T$ is valid transcript for
$f$ on $x$. 
\end{definition}
\medskip

\subsection{Some basic features of a debate for f}\label{rem:remarks}
  
\begin{enumerate}
    \item\label{item:1} As in PCP theory or the definition of
      NP, we often attribute ``meaning" to debate transcripts (e.g.,
      ``$\alpha$ encodes a circuit evaluation"). However, $\alpha$ and
      $\beta$ can in principle be arbitrary strings. Any intended
      protocol structure is enforced entirely by the verifier $V$, and
      the conditions in Equations \ref{0case} and \ref{1case} above. Thus the definition of $DQC(f)$ depends only on the existence of an appropriate query machine $M_\ell$ that satisfies Equations \ref{1case} and \ref{0case}.

    Further, although the behaviour of Prover~0 and Prover~1 may
    depend on $x$ (which they both see), as $M_\ell$ is an
    deterministic $\ell$-query machine, it computes a fixed function
    $V:\{0,1\}^{2k+n}$, on the corresponding variables $\{x_i|~ i \in
    [n]\} \cup \{\alpha_o,\beta_i| ~ i \in k\}.$ In particular, $V$
    may be different from $f$.

 \item As explained above, the function $V$ in the previous item completely determines $f$: one can compute (i.e., write a formula for) $f$ from $V$ by replacing the quantified variables with $\wedge$- and $\vee$-gates in the standard way. It follows immediately that if $f$ depends on all its variables, then $V$ must also depend on every variable in $\{x_1,\dots,x_n\}$ (though it need not depend on all of the additional $2k$ variables).

\item As $V$ is a function whose deterministic query complexity is at most $\ell$, it depends on at most $2^\ell$ variables.  This implies the following:
\begin{itemize}\label{item:3}
    \item 
By the previous item, for a function $f$ that depends on all its
variables, $V$ must depend on at least $n$ variables (that is, $\{x_1,
\ldots ,x_n\}$). Hence we conclude in Corollary~\ref{obs:lower_bound},  the lower bound $\ell \geq \log n$ on the query-complexity for such $f$.
\item There is a fixed set of variables
  $S \subseteq \{x_i|~ i \in [n]\} \cup \{\alpha_i,\beta_i| ~ i \in
  [k]\}, ~ ~ |S| \leq 2^{\ell}$, so that $V$ depends only on $S$. This
  may suggest that the transcript length $2k$ may be bounded by
  $2k \leq 2^{2\ell}$, as we may omit $\alpha_i,\beta_i$ from the
  transcript if neither are queried. While this is true, as shown in
  Lemma~\ref{lem:up_transcript}, it is not directly obvious. Since the
  strategies are adaptive, apriori, it could be that in order to output
  $\alpha_{i+1}$ that is queried, say for the honest Prover~0, the
  prover must see what was
  $\beta_i$, although $\alpha_i,\beta_i$ are not going to be queried.
\end{itemize}
\item We have defined a debate starting with Prover~0. A similar definition could be used when Prover~1 starts, however, the difference is by at most $1$ query. Similarly, in our definition of debate we  force alternations in the transcript.  This could be avoided if for all $x$, there are longer messages, but which might save on transcript length at most a factor of $2$.

\item Our definition uses deterministic verifiers, but, in many settings, randomised verifiers are often more efficent. We show in a later section that the improvements for randomised DQC are modest, and for the primarily interesting case of $\mathsf{PSPACE}$, provide no improvement.

\end{enumerate}

\begin{lemma}\label{lem:up_transcript}
If (A,B,M) is a valid $(k,\ell)$-debate system  for $f$, then there is a
(possibly different) valid $(k',\ell)$-debate for $f$ in which $k \leq 2^{\ell}$.  
\end{lemma}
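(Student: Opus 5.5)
The plan is to show that rounds whose bits the verifier never reads can be deleted. The difficulty raised in the remark above, that an honest prover's later moves might depend on earlier unread moves, is handled by working with the quantified conditions (\ref{1case}) and (\ref{0case}) rather than with the strategies. Those conditions are pure statements about the function $V$ computed by $M_\ell$. Once they hold, honest strategies exist automatically: they are the Skolem functions of the true quantified formula.

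\textbf{Step 1 (bounded support).} $M_\ell$ is an adaptive decision tree of depth $\ell$. It therefore has at most $2^\ell-1$ internal nodes, so the set $S$ of variables that can ever be queried satisfies $|S|\le 2^\ell$. $V$ depends only on the variables in $S$. Call round $i$ \emph{relevant} if $\alpha_i\in S$ or $\beta_i\in S$. There are at most $|S|\le 2^\ell$ relevant rounds. (If the $\alpha_i,\beta_i$ are multi-bit strings, the same count applies to the rounds containing at least one queried bit.)

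\textbf{Step 2 (vacuous quantifiers).} Fix $x$ and consider the prefix $\forall\alpha_1\exists\beta_1\cdots\forall\alpha_k\exists\beta_k\, [V=1]$. For an irrelevant round $i$, let $\psi$ be the formula under $\forall\alpha_i\exists\beta_i$. The claim is that $\psi$ does not depend on $\alpha_i$ or $\beta_i$. The reason is that quantifying away the later variables $\alpha_j,\beta_j$ ($j>i$) of a function independent of $\alpha_i,\beta_i$ produces a function that is still independent of $\alpha_i,\beta_i$. Hence $\forall\alpha_i\exists\beta_i\,\psi\equiv\psi$. Doing this for every irrelevant round shows that (\ref{1case}) is equivalent to the same statement with only the relevant rounds quantified, in their original order. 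The same argument applies to (\ref{0case}).

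\textbf{Step 3 (new protocol).} Relabel the relevant rounds $1,\dots,k'$ with $k'\le 2^\ell$. Take $M'$ to be the same decision tree as $M_\ell$, with each queried transcript variable renamed to its new index. $M'$ still makes $\ell$ queries and computes $V$ restricted to the surviving variables. Round $i$ is kept whole even if only one of $\alpha_i,\beta_i$ lies in $S$. Therefore the alternation $\forall\alpha\exists\beta$ is preserved and the new transcript has the required form. By Step 2, the new quantified conditions hold exactly when the old ones do. This gives a valid $(k',\ell)$-debate, and honest strategies are read off from the winning quantifier choices.

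\textbf{Main obstacle.} The only subtle point is Step 2, which must rebut the worry in the remark that adaptivity forces unread rounds to be kept. The resolution is that DQC is defined through the quantifier conditions, not through particular strategies. I expect the induction on quantifier depth showing that independence is preserved to be routine.
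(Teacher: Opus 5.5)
Your proposal is correct and has the same overall structure as the paper's proof. The set $S$ of variables the decision tree can ever query satisfies $|S|\le 2^\ell$. Every round with neither $\alpha_i$ nor $\beta_i$ in $S$ is deleted, and the verifier's tree is reused unchanged, so at most $|S|$ rounds survive.

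Where you differ is in how you justify that deleting a round is safe despite adaptivity. The paper argues at the level of strategies, one round at a time. The honest prover simulates the deleted round internally by fixing an arbitrary value for the opponent's move there (new-Alice assumes $\beta_i=0$, new-Bob assumes $\alpha_i=0$), then resumes its original strategy. Every play of the shortened debate therefore embeds into a play of the original debate that is consistent with the honest strategy, and $V$ cannot tell the difference because it never reads round $i$.

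You instead work with conditions (\ref{1case}) and (\ref{0case}) directly. For an unqueried round, the subformula under $\forall\alpha_i\exists\beta_i$ is independent of $\alpha_i,\beta_i$, since quantifying out other variables preserves independence. That quantifier pair is therefore vacuous. An induction from the innermost quantifier outward removes all such rounds at once, and honest strategies are recovered as Skolem functions.

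Your version is shorter and more transparently rigorous, and it pinpoints why the adaptivity worry disappears: validity is a property of $V$ alone. The paper's version is more operational, giving an explicit, cheap transformation of the old honest strategies into new ones, whereas you only establish that suitable strategies exist. For this lemma and its use in Lemma~\ref{lem:pspace-lower}, existence is all that is needed.
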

\begin{proof}
  Let $V$ be the boolean function that is computed by the verifier
  machine $M$ (see item~\ref{item:1} above). Let $S$ be the set of
  all variables that $M$ possibly queries when computing $V$ on every
  possible input $x$ and transcript.  As remarked in
  item~\ref{item:3} above, $S$ is a fixed set of size $|S| \leq
  2^{\ell}$.

  Suppose that $\alpha_i, \beta_i \notin S$ for a certain $i \in
  k$. We will show that a new valid $(k-1,\ell)$-debate for $f$ can be constructed
  by deleting the $i$th round of the debate, keeping the verifier, and
  hence the function $V$ unchanged.  Seemingly, this is immediate:
  from the view point of the verifier, since its function $V$ does not
  depend on $\alpha_i,\beta_i$ they can be eliminated from the
  transcript, when viewed as the partial list (string) of input variables. However,
  just eliminating the $i$th round possibly makes the transcript
  invalid as the bits in the $(i+1)$th round depend on the valued
  exchanged at the $i$th round. 
     We show that despite this adaptive nature of the debate game,
     such rounds can be eliminated.

  Let $i$ be the smallest index for which both $\alpha_i, \beta_i \notin
  S$.  The idea is that the strategies for the new Prover~0, Prover~1,
  new-Alice, new-Bob respectively, will be identical to that of Alice
  and Bob for the first $i-1$ rounds of the debate. Then at round $i$,
  The honest prover will assume an arbitrary response for the
  $i$-round, of the `cheating' one, while simulating its own response,
  and thus being able to `jump' right away to the $i+1$ round.

  Formally, for an input $x \in f^{-1}(0)$, new-Alice having done the
  first $i-1$ rounds have full information of what her $\alpha_i$ bit
  be. She then assumes that $\beta_i=0$ (this is an arbitrary decision,
  but it is consistent, as a cheating Bob might have responded that
  way). Then, at that point, new-Alice knows how to resume the debate
  at round $i+1$ skipping doing the $i$- round.  She can keep
  following her strategy where she should be able to make a win.

  Similarly, for $x \in f^{-1}(1)$, new-Bob assumes that $\alpha_i=0$
  (again, an arbitrary decision), and hence is able to compute is own
  $\beta_i$ without actually outputing it. Then the possibly cheating
  A' can resume its part outputing $\alpha_{i+1}'$ which has no
  specific importance (as possibly being a cheat), on which new-Bob
  will be able to simulate Bob's $(i+1)$-move, making it the $i$th
  move of the new transcript.

  We note that in the new debate, we have a more restricted set of
  strategies (as we eliminating a round), but since new-Alice can
  force a win on $0$-input, while new-Bob can force a win on
  $1$-inputs the new system is valid for $f$ (we used here that $V$
  does not query $\alpha_i, \beta_i$ on any possible run, hence from
  his point of view, the transcript bits it sees could have been in
  the original debate).

 Hence, skipping rounds in this way can be done as long as there is
 still an index $i$ as above. Since $|S| \leq 2^{\ell}$ this can be
 done as long as $k > 2^{\ell}$ which completes the proof. 
\end{proof}

\begin{corollary}\label{obs:lower_bound}
For any Boolean function $f:\{0,1\}^{n} \rightarrow \{0,1\}$ that depends on all of its variables,
\[DQC(f) \geq \log n.\]
\end{corollary}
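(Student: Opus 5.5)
The plan is to combine the two observations already made in Section~\ref{rem:remarks}: the verifier's function $V$ determines $f$, and $V$ has deterministic query complexity at most $\ell$. Fix a $(k,\ell)$-debate for $f$ with $\ell = DQC(f)$. Let $V:\{0,1\}^{n+2k}\to\{0,1\}$ be the function computed by $M_\ell$ on the variables $x_1,\dots,x_n,\alpha_1,\beta_1,\dots,\alpha_k,\beta_k$.

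\textbf{Step 1: $f$ is determined by $V$.} By conditions (\ref{1case}) and (\ref{0case}), which are complementary, we have
\[
f(x) \;=\; \bigwedge_{\alpha_1}\bigvee_{\beta_1}\cdots\bigwedge_{\alpha_k}\bigvee_{\beta_k} V(x,\alpha,\beta)
\]
for every $x$. Suppose $V$ does not depend on some $x_i$. Then for every fixed transcript $(\alpha,\beta)$, flipping $x_i$ leaves $V$ unchanged. Hence the whole quantified expression is unchanged, so $f$ does not depend on $x_i$. This contradicts the hypothesis, so $V$ depends on all $n$ input variables.

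\textbf{Step 2: a depth-$\ell$ query machine touches at most $2^\ell$ variables.} View $M_\ell$ as a binary decision tree of depth at most $\ell$, where each internal node queries one variable. Such a tree has at most $2^\ell-1$ internal nodes. The function it computes depends only on the variables labelling those nodes: two inputs that agree on all of them follow the same root-to-leaf path. Therefore $V$ depends on at most $2^\ell-1 < 2^\ell$ variables.

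\textbf{Step 3: conclude.} Combining the two steps gives $n \le 2^\ell - 1$, hence $\ell \ge \log(n+1) \ge \log n$.

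The step requiring the most care is Step 1. The verifier sees $x$, but the quantifiers range only over the transcript, so one has to check that "$V$ independent of $x_i$" really propagates through the alternating $\wedge/\vee$ to "$f$ independent of $x_i$". This follows pointwise in $(\alpha,\beta)$, as argued above.

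The argument does not use Lemma~\ref{lem:up_transcript}, since $k$ is irrelevant to the count. One caveat concerns adaptivity: the verifier may query different variables on different branches. The set of variables it could ever query is still bounded by the number of internal nodes of the tree, and that is the bound Step 2 uses.
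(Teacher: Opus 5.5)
Your proof is correct and is essentially the paper's argument (items 2 and 3 of the remarks in Section~\ref{rem:remarks}). There, $f$ is recovered from $V$ by replacing the quantified transcript variables with $\wedge/\vee$, which forces $V$ to depend on every $x_i$, while a depth-$\ell$ decision tree depends on at most $2^\ell$ variables; your count of $2^\ell-1$ internal nodes is a harmless sharpening, and you are right that Lemma~\ref{lem:up_transcript} plays no role.
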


\section{Circuit Upper Bounds}

We now establish upper bounds for DQC. We obtain two types of bounds: a depth-based bound $DQC(f) \leq depth(C_f)$ using an adaptation of Karchmer-Wigderson games, and a size-based bound $DQC(f) \leq \log(size(C_f)) + 3$ using cross-examination. These bounds employ different techniques with different tradeoffs, and are useful for different classes of circuits.

We use $C_f$ to refer to the fan-in-2 AND/OR circuit implementing function $f$. Without loss of generality we assume that $C_f$ is leveled with a top AND gate and all gates in subsequent levels alternate between AND and OR gates, although this is only necessary for subsection \ref{sect:depth upper bound}. We use $depth(C_f)$ for the depth of $C_f$ and $size(C_f)$ for the number of gates. We may also presume that the size/depth are powers of 2 to avoid using ceiling operators. 

\subsection{Depth Upper Bound}\label{sect:depth upper bound}

Our first result is based on the Karchmer-Wigderson game. Karchmer and Wigderson noticed a relation between the circuit depth of a Boolean function $f$ and the communication complexity of the following relation: Alice is given $x \in f^{-1}(0)$ and Bob is given $y \in f^{-1}(1)$, and they must communicate to output an index $i$ such that $x_i \neq y_i$. The minimal communication required equals the fan-in-2 $(\wedge,\vee)$-circuit depth of $f$. We adapt this game to the debate setting, where provers traverse the circuit structure via alternating messages to convince the verifier of the output value.

\begin{theorem}\label{thm:depth-upper-bound}
Let $C_f$ be a circuit for $f$ over any basis. Then
\[
DQC(f) \leq depth(C_f) + 1
\]
\end{theorem}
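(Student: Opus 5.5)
The plan is to turn the Karchmer--Wigderson walk down the circuit into a debate in which each transcript bit is one step of the walk. As stipulated at the start of this section, I take $C_f$ to be a fan-in-2 AND/OR circuit with literals $x_i$ or $\neg x_i$ at the leaves. It is leveled, with a top AND gate and alternating AND/OR levels. If the given circuit is not of this form, I first normalise it without increasing depth. Missing levels are padded by dummy gates $g\wedge g$ or $g\vee g$, which compute the same value. Two consecutive gates of the same type are handled by the same padding trick. De Morgan's laws push negations down to the leaves without changing depth.

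\textbf{The protocol.} The debate has $k=depth(C_f)/2$ rounds, and each message $\alpha_i,\beta_i$ is a single bit. We walk from the output gate towards a leaf. At an AND level, Prover~0 writes one bit naming which child to descend to. At an OR level, Prover~1 writes one bit naming the child. Since the top gate is an AND and levels alternate, the mover sequence is exactly Prover~0, Prover~1, Prover~0, and so on, which matches the forced alternation $\alpha_1,\beta_1,\ldots$ of the definition. The verifier $M$ reads all $depth(C_f)$ transcript bits in order. These bits determine a root-to-leaf path, hence a leaf literal $x_i$ or $\neg x_i$. $M$ then makes one final query to $x_i$ and outputs the value of the literal. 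The total is $depth(C_f)+1$ queries.

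\textbf{Correctness.} I argue by an invariant: the honest prover can keep the value of the current gate on input $x$ equal to her claim $f(x)$. Suppose first that $f(x)=0$ and the current gate has value $0$. If it is an AND gate, some child has value $0$, and Prover~0 picks it. If it is an OR gate, both children have value $0$, so whatever Prover~1 picks keeps the invariant. The case $f(x)=1$ is symmetric: Prover~1 picks a $1$-child at OR gates, and at AND gates both children are $1$. By induction, the leaf reached has value $f(x)$, so $M$ outputs $f(x)$. This verifies conditions (\ref{1case}) and (\ref{0case}) with the quantifier order of the definition, because each prover's choice depends only on $x$ and the path so far.

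\textbf{Main obstacle.} The substantive step is not the game argument but the normalisation. I must make sure the leveled alternating form, with the top gate an AND so that Prover~0 moves first, costs no extra depth beyond what the statement allows. If the top gate is an OR, the proper fix is to insert one dummy AND above it. That adds one level, and I would absorb it using the paper's remark that changing which prover starts costs at most one query. For a genuinely general fan-in-2 basis, I would instead have the mover at each gate commit to child values consistent with the gate function and have the opponent choose which child to follow. I expect this to be the delicate point in matching the exact bound $depth(C_f)+1$ rather than a constant multiple of the depth, so I would first try to justify the AND/OR reduction as the paper's convention assumes.
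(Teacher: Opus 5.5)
Your core argument is the same as the paper's proof. It is a Karchmer--Wigderson descent in which Prover~0 chooses at AND gates and Prover~1 at OR gates. The verifier reads the path bits and then queries the single input at the leaf. Your invariant argument for correctness is also right.

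The step that fails is the normalisation, which the paper simply asserts as ``without loss of generality'' and which you try to justify. Inserting a dummy gate of the opposite type between two consecutive gates of the same type \emph{does} increase depth. A balanced AND-tree of depth $d$ becomes an alternating circuit of depth $2d-1$, so your ``verifier reads all transcript bits'' costs $2d$ queries rather than $d+1$. Likewise, putting a dummy AND above a top OR gate and appealing to the remark about which prover starts gives $depth(C_f)+2$, not $depth(C_f)+1$. That remark costs a query; it does not save one.

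The missing observation is that the bit at a dummy gate $g\wedge g$ or $g\vee g$ never needs to be queried, because both choices lead to the same gate $g$. With that, the count is the number of genuine gates on the path plus one, which is at most $depth(C_f)+1$.

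Cleaner still, drop levelling and alternation altogether. Use one round $(\alpha_i,\beta_i)$ per gate on the path. The adaptive verifier knows the current gate from the circuit and the bits read so far. It queries only $\alpha_i$ at an AND gate and only $\beta_i$ at an OR gate, treating the other prover's bit as filler. This respects the forced alternation and the quantifier order of the definition. It handles a top OR gate at no cost, and it only doubles the transcript length, which DQC does not count.

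On the ``any basis'' clause, the paper's proof is also confined to AND/OR circuits, so you are not missing an argument there. Your commit-and-challenge game for general gates would only give an $O(depth(C_f))$ bound.
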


\begin{proof}
We assume w.l.o.g.\ that the top gate of the circuit $C_f$ is an $\wedge$-gate (AND-gate). Our proof makes use of the game constructed by Karchmer and Wigderson for their communication complexity setting.

Let Prover~0 assume the role of the \emph{defender}, and Prover~1 act as the \emph{challenger}. Specifically, if $f(x) = 0$, then Prover~0 can successfully defend the claim that $f(x) = 0$, forcing the verifier to output $0$. Conversely, for inputs $x$ such that $f(x) = 1$, Prover~0 must inevitably fail, and under optimal play by Prover~1, the verifier will output $1$.

The game begins at the top gate of the circuit, corresponding to the
output bit. This final gate is an $\wedge$, thus if indeed $f(x)=0$ as
Prover~0 claims, the output of this gate is a 0,  and hence one of the inputs must have been a $0$. The first move (bit = $\alpha_1$) of Prover~0 is to point to this input as one of the two input legs. We trace this input leg to the next gate, which is an $\vee$. As Prover~0 claims this gate evaluates to $0$, if Prover~1 wants to challenge this, he should be able to point to input of that gate that is $1$. So, this would be the interpretation of its bit $\beta_1$. 

This interaction continues iteratively: the provers descend through the circuit, gate by gate, toward the input level. In its turn - Prover~0 will always be at a $\wedge$-gate he claims is $0$, by pointing to a $0$ input to it, and Prover~1 will be at a $\vee$-gate claiming that it is $1$ by pointing to a $1$-input to it.  Eventually, the process reaches a circuit input, which can be directly compared to the relevant input $x_i$ by the verifier.

At every stage of this game each honest prover should have been able to point to their preferred value if the final output was as they claimed. The verifier can read the $depth(C_f)$ steps (that is - the whole transcript), to determine which input the game points to, then use this value as the final output of the debate.
\end{proof}

The depth bound is optimal for shallow circuits such as those in AC$^0$ or NC$^1$. However, it can be unsatisfying for functions with deep but small circuits. For instance, a circuit of size $n^2$ but depth $n$ would give a DQC upper bound of $n$, while we might hope for something closer to $\log(n^2) = 2\log(n)$. We address this limitation with our next result.

\subsection{Size Upper Bound via Cross-Examination}\label{sec:ub_circuit}

We can achieve logarithmic dependence on circuit size by employing a
different technique: \emph{cross-examination}. In this paradigm, one
prover performs an entire computation and writes down all intermediate
steps, while the other prover challenges  the computation by  pointing
out  a single location where an error occurred. The verifier need only check the location identified by the second prover. This strategy trades debate length (now linear in the computation size) for query efficiency (now logarithmic), as the verifier requires only $O(\log m)$ bits to specify a location in a computation of size $m$.

\begin{theorem}\label{thm:size-upper-bound}
    $DQC(f)\leq \log(size(C_f))+3$
\end{theorem}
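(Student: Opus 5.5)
We plan a cross-examination protocol. Let $s=size(C_f)$ and number the gates $g_1,\dots,g_s$ in topological order, with $g_s$ the output gate; each gate has fan-in two, and its inputs are either earlier gates or input variables $x_i$. In the first phase Prover~0 writes down, one bit per round, claimed values $v_1,\dots,v_s$ for all gates ($\alpha_j=v_j$, with Prover~1's interleaved bits $\beta_j$ ignored). The intended meaning is that $v_j$ is the value of $g_j$ on $x$ and that $v_s=f(x)$. In the second phase Prover~1 writes, in $\log s$ further rounds, the binary index $j$ of a gate he accuses of being inconsistent, i.e. $v_j\neq g_j(\text{claimed values of its inputs})$. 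Each index bit sits in its own $\beta$ slot, and the intervening $\alpha$'s are dummies.

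The verifier works as follows. It first queries $v_s$. If $v_s=1$, Prover~0 is claiming output $1$, which contradicts her role, so the verifier outputs $1$. Since the transcript is structured so that Prover~0 commits and Prover~1 challenges, we handle both truth values of $f(x)$ with the same protocol. If $v_s=0$, the verifier reads the $\log s$ index bits to learn $j$. The wiring of $g_j$ is fixed, so the verifier knows, with no further queries, which two positions hold the inputs of $g_j$; these are gate values $v_a,v_b$ or input bits $x_i$. It queries $v_j$ and the two input values, which costs 3 queries. It outputs $1$ (Prover~1 wins) iff $v_j\neq g_j(\cdot,\cdot)$, and $0$ otherwise. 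The total is $1+\log s+3$ queries, one more than claimed. To save one, we let Prover~1 point to $j=s$ only when needed and fold the check of $v_s$ into the local check. Concretely, we add a virtual gate $g_{s+1}$ whose required value is the constant $0$ applied to $v_s$. Alternatively, and more simply, we note that when $j$ is a gate with at least one input among the $x$'s or when $j=s$, the verifier can specialise. The cleanest fix is to have the verifier accept Prover~1's challenge if $j=s$ and $v_s=1$, so that checking the output costs only one query along the $j=s$ branch. Then every root-to-leaf path reads $\log s$ index bits plus at most 3 values, giving $\log s+3$.

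Correctness. If $f(x)=0$, honest Prover~0 writes the true gate values, every local check passes, and $v_s=0$, so no choice of $j$ lets Prover~1 win. If $f(x)=1$, then whatever Prover~0 writes, either $v_s=1$, and Prover~1 points to $s$ and wins, or $v_s=0\neq f(x)$. In the latter case some gate is locally inconsistent: take the first gate in topological order whose claimed value differs from its true value. Its inputs are then correct, so its claim is inconsistent with them, and Prover~1 points to it. This requires Prover~1 to choose $j$ after seeing all of $\alpha$, which the ordering of the phases guarantees.

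The main obstacle is the exact constant: naive bookkeeping gives $\log s+4$. I expect to resolve it with the $j=s$ trick above, or by padding the gate count to a power of two and using one unused index to encode the output check.
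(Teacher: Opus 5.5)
Your final protocol is correct and is essentially the paper's proof: Prover~0 writes every gate value, Prover~1 names a gate in $\log s$ bits, and the verifier spends 3 queries on that gate and its two inputs, with the output check folded into the $j=s$ branch exactly as the paper does (keep this fix rather than the virtual-gate or padding variants, which can cost an extra index bit when $s$ is already a power of two). Your first-inconsistent-gate soundness argument and the dummy interleaved bits make explicit details that the paper leaves implicit.
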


\begin{proof}
    Prover~0, whom we now call Alice, makes the first $size(C_f)$ moves, outputting the value of every gate in the circuit. Prover~1, whom we call Bob, then either accepts Alice's computation or points to the location of a gate whose output does not follow from its two inputs (which may be circuit inputs or outputs of other gates that Alice has claimed).

    The verifier reads Bob's output, which specifies a gate index (requiring $\log(size(C_f))$ queries). The verifier goes to that location in Alice's transcript and queries Alice's claimed value for that gate (1 query) and for its two input wires (2 queries). Since the circuit structure is known to all parties, the verifier can determine which gate type this is and check whether Alice's output matches the logic table for that gate type applied to the two inputs.
    
    If Alice's gate has been incorrectly implemented, the verifier outputs 1. Additionally, if Bob has pointed to the output gate and its value is 1, the verifier outputs 1. Otherwise, the verifier outputs 0.

    The total number of queries is $\log(size(C_f)) + 3$.
\end{proof}

This cross-examination technique is quite general and will prove even more powerful in the next section, where we use it to compress arbitrary polynomial-time verifiers into logarithmic query complexity.

Our circuit upper bounds have an intriguing consequence for circuit complexity: they imply that proving strong lower bounds on DQC is at least as hard as proving circuit lower bounds. Since the best known explicit circuit lower bound for languages in $\mathsf{NP}$ is approximately $5n$ \cite{iwama2002explicit}, our size upper bound creates a barrier to improving DQC lower bounds beyond logarithmic.

\begin{corollary}[Connection to circuit lower bounds]\label{cor:circuit-lower-bounds}
Any sufficient lower bound on $DQC$ for a function in $\mathsf{NP}$ would yield new circuit lower bounds. Specifically, if there exists a language $L \in \mathsf{NP}$ with characteristic function $f$ such that $DQC(f) \geq \log n + 6$, then $L$ requires circuits of size at least $8n$.
\end{corollary}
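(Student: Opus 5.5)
The plan is to argue by contraposition from Theorem~\ref{thm:size-upper-bound}. Suppose $L \in \mathsf{NP}$ has characteristic function $f$ (restricted to inputs of length $n$) and $DQC(f) \geq \log n + 6$. Assume, for contradiction, that $f$ has a fan-in-two circuit $C_f$ with $size(C_f) < 8n$. Theorem~\ref{thm:size-upper-bound} then gives
\[
DQC(f) \leq \log(size(C_f)) + 3 < \log(8n) + 3 = \log n + 6,
\]
which contradicts the hypothesis. So every circuit for $f$ has size at least $8n$. Since the best known explicit bound for $\mathsf{NP}$ is about $5n$ \cite{iwama2002explicit}, this would be a new circuit lower bound.

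The steps, in order, are as follows.
\begin{enumerate}
\item Fix the input length $n$ and apply the hypothesis to the slice $f_n$. If the hypothesis is meant for infinitely many $n$, the conclusion holds for those same $n$.
\item Put an arbitrary circuit for $f_n$ into the form Section~3 assumes: fan-in-two AND/OR gates with negations available at the inputs. The size theorem's proof does not need the leveled alternating structure, since the verifier knows each gate's type.
\item Apply Theorem~\ref{thm:size-upper-bound} and compare logarithms as above.
\end{enumerate}

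The main obstacle is bookkeeping rather than a new idea. First, the paper assumes sizes are powers of two to avoid ceilings. Without that assumption the size theorem really reads $\lceil \log size(C_f) \rceil + 3$, so $size(C_f) < 8n$ gives only $\lceil \log size(C_f) \rceil \leq \lceil \log (8n) \rceil$. This is $\log n + 3$ exactly when $n$ is a power of two. In general I would either restrict to such $n$, or note that the statement implicitly drops ceilings, as the paper already does. Second, the ``size'' must be measured consistently (gates versus wires, whether input literals count). I would state that it counts gates, as in Section~3, and that the three extra queries already cover reading circuit inputs as wires.
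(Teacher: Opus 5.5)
Your main argument is the paper's argument. The paper takes a minimum circuit $C_n$, writes $\log n + 6 \le DQC(f) \le \log(size(C_n)) + 3$, and rearranges to get $size(C_n)\ge 8n$; that is the direct form of your contrapositive. Both versions use Theorem~\ref{thm:size-upper-bound} with the real logarithm, which is the paper's powers-of-two convention.

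Your bookkeeping paragraph makes one wrong claim: restricting to $n$ a power of two does not repair the ceiling issue. Take $n=2^j$ and $4n < m < 8n$. Then $\lceil \log m \rceil = j+3$, so the ceilinged bound gives only $DQC(f) \le \log n + 6$, which is consistent with the hypothesis $DQC(f)\ge \log n+6$. Your strict inequality came from the strict monotonicity of the real logarithm, and rounding destroys it exactly at powers of two.

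In general, because $DQC(f)$ is an integer, the hypothesis means $DQC(f) \ge \lceil \log n \rceil + 6$. Combined with $DQC(f)\le \lceil \log m\rceil + 3$, this yields only $m > 2^{\lceil \log n\rceil+2} \ge 4n$. Powers of two are the worst case here, not the good case. So the constant $8n$ at threshold $\log n + 6$ holds only under the paper's convention of dropping ceilings, which is your second option. If you want a statement that survives honest rounding, raise the threshold by one. Then $DQC(f)\ge \log n + 7$ gives $\lceil \log m\rceil \ge \lceil \log n\rceil + 4$, hence $m > 2^{\lceil \log n \rceil + 3} \ge 8n$.
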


\begin{proof}
By Theorem~\ref{thm:size-upper-bound}, any Boolean function $f$ with a circuit of size $m$ satisfies $DQC(f) \leq \log(m) + 3$. 

Let $L \in \mathsf{NP}$ be a language with characteristic function
$f:\{0,1\}^n \to \{0,1\}$, and let $\{C_n\}$ be a family of the
smallest circuits computing $f$ (in terms of their size). Suppose $DQC(f) \geq \log n + 6$. Then:
\[
\log n + 6 \leq DQC(f) \leq \log(\text{size}(C_n)) + 3
\]

Rearranging gives $\log(\text{size}(C_n)) \geq \log n + 3$, which implies $\text{size}(C_n) \geq 2^{\log n + 3} = 8n$.

Since the best known circuit lower bound for explicit languages in $\mathsf{P}$ is approximately $5n$ \cite{iwama2002explicit}, proving $DQC(f) \geq \log n + 6$ for any such language would constitute a breakthrough in circuit complexity.
\end{proof}

\section{Cross-Examination and PSPACE}

A key tool in the previous section was cross-examination: having one prover perform a computation while the other prover either accepts or points to an error. By formalising cross examination we are able to prove that $\mathsf{PSPACE/poly}$ is precisely the class of functions decidable with $O(\log n)$ queries. The proof proceeds in two steps: first, we establish a general cross-examination lemma showing that any polynomial-time verifier can be simulated with logarithmic queries. Second, we use this lemma to prove both directions of the characterisation.

\subsection{The Cross-Examination Lemma}

The power of cross-examination lies in its ability to compress verification: rather than having the verifier execute a polynomial-time computation, we have one prover simulate that computation while the other verifies a single step. This reduces the query complexity from polynomial to logarithmic.

\begin{lemma}[Cross-Examination Lemma]\label{lma:cross-examination}
Suppose that $f$ has a $(k,\ell)$-debate in which the deterministic
verifier  computes a function $V$ (on the input bits $(\alpha_1,
\beta_1, \ldots \alpha_k, \beta_k, x_1, \ldots x_n)$ that is
computable by a circuit  $C$ of size $m$. Then
\[
DQC(f) \leq \log(m) + 3
\]
\end{lemma}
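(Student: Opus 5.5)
The plan is to extend the original $(k,\ell)$-debate by one final cross-examination round. This reuses the argument of Theorem~\ref{thm:size-upper-bound}, but now with the circuit $C$ for the verifier's function $V$ in place of a circuit for $f$. First the provers play the original debate, producing $\alpha_1,\beta_1,\ldots,\alpha_k,\beta_k$ exactly as before. Then Prover~0 (Alice) writes down a claimed value for every gate of $C$ evaluated on $(x,\alpha,\beta)$. This takes $m$ moves; we pad with dummy moves of Prover~1 to respect the alternation convention. Finally Prover~1 (Bob) writes a gate index $g\in[m]$, in $\log m$ bits.

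The new verifier behaves exactly as in Theorem~\ref{thm:size-upper-bound}. It reads $g$ with $\log m$ queries. It then queries Alice's claimed value at $g$ and the two wires feeding $g$; each wire is either an earlier gate value in Alice's table or a bit among $x,\alpha,\beta$, and the wiring of $C$ is fixed and known, so this is 3 queries. It outputs $1$ if the local gate constraint is violated, or if $g$ is the output gate and Alice's claimed value there is $1$. Otherwise it outputs $0$. The total is $\log m+3$ queries.

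Correctness follows by showing that the new game has the same value as the old one on every $x$. Fix any play of the first $2k$ rounds. If $V(x,\alpha,\beta)=0$, Alice can write the true gate values. Then no gate is locally inconsistent and the output gate reads $0$, so every choice of $g$ yields output $0$. If $V(x,\alpha,\beta)=1$, any table Alice writes either is fully consistent, in which case its output gate equals the true value $1$ and Bob points to the output gate, or it has an inconsistent gate, which Bob points to. Either way the verifier outputs $1$.

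So the final phase computes exactly $V(x,\alpha,\beta)$ under optimal play. Substituting this into the quantifier conditions (\ref{1case}) and (\ref{0case}) shows that the honest strategies of the original debate, extended by these final-phase strategies, win. The main point requiring care is this last step: we must check that appending a game whose minimax value equals $V$ preserves the alternating-quantifier conditions. That holds because the extra rounds sit innermost and by De Morgan the value is determined pointwise. The remaining work is bookkeeping: the dishonest prover may deviate in the final phase too, and the case analysis above already quantifies over all its moves.
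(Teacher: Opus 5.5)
Your proposal is correct and follows essentially the same route as the paper. You run the original debate, have Prover~0 write every gate value of $C$, have Prover~1 name either an inconsistent gate or the output gate, and have the verifier check that single gate with $\log m + 3$ queries; your explicit argument that the appended phase has value exactly $V(x,\alpha,\beta)$ pointwise, which carries the quantifier conditions over, states more cleanly the correctness step the paper only sketches.
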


\noindent\textbf{Remarks:} The verifier computes a boolean function
$V$ that has, by definition,  query complexity $\ell$. Hence $V$ has a
circuit of size at most $2^{\ell}$. However, it could be that the
optimal size for $V$ is $m << 2^{\ell}$. The lemma then uses the
result in Section~\ref{sec:ub_circuit}, to form (possibly another)
debate for $f$ of query complexity $\log( m) +3,$ by verifying the
computation of $V$ rather computing it. Namely, it uses another
verifier (function) $V'$ (and also augmented provers), to verify the
function $V$ in addition to the verification of the value of $f$. 

A second remark is that, this lemma {\em cannot} be recursively
applied to its own verifier to get an ever decreasing  a better
complexity, since when $m \geq 2^{\ell}$ (that is, the decision tree of
$V$ is balanced, the complexity actually deteriorates (by the additive
$3$-term)). 

\begin{proof}
Let Alice and Bob (Prover~0, Prover~1, respectively) perform their
debate of $k$ rounds,  as per the original procedure, producing
transcript $T = (\alpha, \beta)$. In our new debate, new-Alice makes
additional $m$ moves, outputting the value of every gate in the
verification circuit $C$ that computes $V$ on the variables of $\alpha,
\beta, x$. Either new-Bob points to the claimed output value $1$
(concluding that $V$ should have  accepted contrary to what Alice goal
was), leading new-V to accept, {\em or} new-Bob points to the location of a gate whose output does not follow from its inputs.

The new-Verifier $V$, reads Bob's output (requiring $\log(m)$ queries) which specifies a gate index. It then queries Alice's claimed values
for that gate and its two inputs (3 additional queries). Since the circuit
structure is known to all parties, the verifier can determine the gate
type and check whether Alice's output matches the logic table. If the gate Bob points to is an input to the circuit, then the verifier queries the corresponding location in the original transcript. 

If Alice's gate has been incorrectly implemented, the verifier is
convinced by new-Bob's claim that indeed Alice was wrong. Otherwise, it
conclude that new-Alice, simulating Alice is correct and decides for Alice.

Correctness follows from the same argument as
Theorem~\ref{thm:size-upper-bound}: for a $0$-input to $f$, if Alice
is honest about the circuit evaluation, new-Bob cannot point to an
error, and the new-V reading the output value would decide for Alice,
as it should. For a $1$ input to $f$,  the original $V$ must compute
$1$, hence Bob should be able to point to an erroneous gate (or that
the output gate computes $1$), making new-$V$ to vote for Bob
regardless to a possibly cheating new-Alice.
\end{proof}

This lemma is powerful because it applies to \emph{any}
polynomial-time verifier: if a function can be computed by a debate
verified by a polynomial-size circuit, then it has logarithmic DQC,
although possibly, the query-complexity of this function might be
as large as the input size $n$.
We now use this to characterise the class of functions with $O(\log n)$ query complexity.

\subsection{Characterising PSPACE/poly}

\begin{theorem}\label{thm:pspace-characterisation}
$$\mathsf{PSPACE/poly} = \{f : DQC(f) \leq O(\log n)\}$$
\end{theorem}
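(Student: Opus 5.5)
We prove the two inclusions separately. In both directions the bridge is the fact that $\mathsf{PSPACE}$ equals the class of languages having polynomial-length debates with polynomial-time verifiers \cite{irving2018ai}, equivalently the true quantified Boolean formulas (TQBF). Here $f$ is understood as a family $\{f_n\}$, one function for each input length.

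\textbf{($\subseteq$).} Let $f\in\mathsf{PSPACE/poly}$, decided by a $\mathsf{PSPACE}$ machine with advice string $a_n$ of length $\mathrm{poly}(n)$. The pair $(x,a_n)$ is an instance of a $\mathsf{PSPACE}$ language $L'$. Since TQBF is $\mathsf{PSPACE}$-complete, membership of $(x,a_n)$ in $L'$ is equivalent to the truth of a quantified formula
\[
Q_1 z_1 \cdots Q_{p} z_{p}\ \phi(x,a_n,z),
\]
where $p=\mathrm{poly}(n)$ and $\phi$ has a circuit of size $\mathrm{poly}(n)$. We then make three adjustments.
\begin{enumerate}[label=(\roman*)]
\item Hardwire $a_n$ into the circuit for $\phi$. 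This is allowed because the verifier for $f_n$ may depend arbitrarily on $n$.
\item Pad the quantifier prefix with dummy variables so that it alternates exactly as $\exists\alpha_1\forall\beta_1\cdots$ in the form of (\ref{0case}). Negate $\phi$ if needed so that it matches the conventions of (\ref{1case}) and (\ref{0case}).
\end{enumerate}
The result is a $(k,\ell)$-debate for $f$ with $k=\mathrm{poly}(n)$, whose verifier function $V=\phi$ has a circuit of size $m=\mathrm{poly}(n)$. The Cross-Examination Lemma (Lemma~\ref{lma:cross-examination}) then gives $DQC(f)\le \log m+3=O(\log n)$.

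\textbf{($\supseteq$).} Suppose $DQC(f_n)\le c\log n$, with verifier $M_\ell$ where $\ell=c\log n$. By Lemma~\ref{lem:up_transcript} we may assume $k\le 2^\ell=n^{c}$, so the transcript has polynomial length. The verifier is a decision tree of depth $\ell$, so it has at most $2^{\ell+1}=\mathrm{poly}(n)$ nodes. Each node is labelled by a variable index among $n+2k$ positions, and each leaf by an output bit. The whole tree can therefore be written as an advice string of $\mathrm{poly}(n)$ bits. A $\mathsf{PSPACE}$ machine that receives this advice evaluates
\[
\forall\alpha_1\exists\beta_1\cdots\forall\alpha_k\exists\beta_k\ \bigl[M_\ell(x,\alpha,\beta)=1\bigr]
\]
by the standard recursive game-tree search. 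The recursion has depth $2k$, and each level stores $O(1)$ bits plus the current transcript, so it runs in polynomial space. At the leaves it walks the advice tree in polynomial time. By (\ref{1case}) and (\ref{0case}) this quantified expression equals $f(x)$, so $f\in\mathsf{PSPACE/poly}$.

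\textbf{Main obstacle.} The main obstacle is bookkeeping in the ($\supseteq$) direction. We need Lemma~\ref{lem:up_transcript}, because without a bound on $k$ the game tree could be superpolynomially deep. We also need the message lengths $|\alpha_i|$ and $|\beta_i|$ to be polynomial. If messages are not single bits, we bound their total length the same way: only at most $2^\ell$ transcript positions are ever queried, so the unqueried portions can be deleted as in Lemma~\ref{lem:up_transcript}. In the ($\subseteq$) direction, the delicate point is that the circuit for $\phi$, and not only its query complexity, must have polynomial size. This holds because $\phi$ is a polynomial-time predicate with hardwired advice.
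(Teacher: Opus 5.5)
Your proof is correct and follows essentially the same route as the paper. For $\subseteq$ you apply the Cross-Examination Lemma (Lemma~\ref{lma:cross-examination}) to a polynomial-length debate whose verifier is a polynomial-size circuit, and for $\supseteq$ you combine Lemma~\ref{lem:up_transcript} with encoding the depth-$O(\log n)$ verifier tree as polynomial-length advice. The differences are presentational only: you spell out, via TQBF, hardwiring $a_n$, and padding the quantifier prefix, the step the paper passes over as ``this extends to the non-uniform case,'' and you evaluate the game tree directly in polynomial space instead of appealing to $\mathsf{DEBATE}$ with advice.
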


The proof proceeds by establishing both inclusions separately. The
forward direction uses the cross-examination lemma: since PSPACE has
polynomial-time verifiers, we can compress them to logarithmic
queries. In the reverse direction we show that logarithmic query
complexity implies that the debate can be compressed to polynomial length
with polynomial-time verification using advice.

\begin{lemma}\label{lem:pspace-upper}
$\mathsf{PSPACE/poly} \subseteq \{f : DQC(f) \leq O(\log n)\}$
\end{lemma}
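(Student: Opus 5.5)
The plan is to turn membership in $\mathsf{PSPACE/poly}$ into a polynomial-length debate whose verifier is a polynomial-size circuit, and then invoke the Cross-Examination Lemma (Lemma~\ref{lma:cross-examination}).

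\textbf{Step 1: a uniform $\mathsf{PSPACE}$ language plus advice.} Let $f=\{f_n\}\in\mathsf{PSPACE/poly}$. Then there is a language $L\in\mathsf{PSPACE}$ and advice strings $a_n$ with $|a_n|\le p(n)$ such that $f_n(x)=1$ iff $(x,a_n)\in L$.

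\textbf{Step 2: a debate for $L$ with a circuit verifier.} By the standard $\mathsf{PSPACE}=\mathsf{TQBF}$-style characterisation (the result of \cite{irving2018ai} that polynomial debates with polynomial-time verifiers decide exactly $\mathsf{PSPACE}$), there is a polynomial $q$ and a deterministic polynomial-time predicate $R$ such that
\[
(y\in L) \iff \forall \alpha_1\exists\beta_1\cdots\forall\alpha_k\exists\beta_k\; R(y,\alpha,\beta)=1,
\]
with $k\le q(|y|)$ and single-bit moves. Concretely, one would reduce to TQBF and pad or merge quantifier blocks so that the moves alternate starting with Prover~0. Because $R$ is polynomial-time, for each $n$ it has a circuit of size $\mathrm{poly}(n)$ by the usual tableau construction.

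\textbf{Step 3: hardwire the advice.} Fix $n$ and hardwire $a_n$ into this circuit. This gives a circuit $C_n$ of size $m=\mathrm{poly}(n)$ on the variables $(x,\alpha,\beta)$. The function $V_n(x,\alpha,\beta)=R((x,a_n),\alpha,\beta)$ then satisfies conditions (\ref{1case}) and (\ref{0case}) for $f_n$. Since $DQC$ is nonuniform, the fact that $a_n$ is not computable is harmless: the verifier is simply fixed per $n$.

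\textbf{Step 4: apply the lemma.} The debate from Step~3 is a $(k,\ell)$-debate for $f_n$ for some $\ell\le n+2k$, which is trivially enough. Lemma~\ref{lma:cross-examination} then gives $DQC(f_n)\le \log m+3=O(\log n)$.

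\textbf{Main obstacle.} The step needing the most care is Step~2: getting the quantifier structure to exactly match the paper's format. That format has strict alternation, starts with $\forall\alpha_1$ in the $1$-case, uses a fixed $k$ independent of $x$, and requires the $0$-case to be the exact De Morgan dual.

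I would handle this as follows. Insert dummy moves that $R$ ignores, so that blocks of like quantifiers become alternating. Pad every instance of length $n$ to the same number of rounds $k(n)$. Encode multi-bit moves as sequences of single-bit moves separated by ignored opponent moves.

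Duality holds automatically, because the 0-condition is the negation of the 1-condition for any fixed $V$. I would also check that the verifier circuit size stays polynomial after this padding. It does, because the padding only adds polynomially many ignored input wires.
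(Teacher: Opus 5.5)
Your proposal is correct and follows the same route as the paper. You take a polynomial-length debate with a polynomial-time verifier for the underlying $\mathsf{PSPACE}$ language, hardwire the advice to get a polynomial-size verifier circuit for $f_n$, and apply the Cross-Examination Lemma (Lemma~\ref{lma:cross-examination}) to get $DQC(f_n)\le \log(\mathrm{poly}(n))+3=O(\log n)$, while making explicit the non-uniform extension and the padding into the strictly alternating single-bit format, which the paper asserts in one line.
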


\begin{proof}
$\mathsf{PSPACE}$ can be decided by debates of polynomial length with
polynomial-time verifiers\footnote{A sketch for completeness: If $L$
  in PSPACE, it means that there is a polys-pace machine, that for
  every input (of size $n$) and a state $w$ (encoded by $poly(n)$ string),
  give (in linear time in $|w|$) the next state. Thus a verification
  of acceptance can be done by 'Savich-like' procedure verifying the
  computation path (that can be $exp(n)$ length), but reducing at each
  query to this machine, the path-length by a factor of $2$ (using the
  challenger Prover~0) to point to the allegedly wrong half of the
  claimed computation path). Thus
  after $poly(n)$ time we reach to adjacent states which can be
  verified by the verifier.}  \cite{irving2018ai}. This extends to the non-uniform case, i.e. $\mathsf{PSPACE/poly}$ can be decided by debates of polynomial length judged by polynomial-size circuits.

By the Cross-Examination Lemma (Lemma~\ref{lma:cross-examination}), this debate can be verified with
\[
DQC(f) \leq \log(\text{size}(C_n)) + 3 = \log(\text{poly}(n)) + 3 = O(\log n)
\]
queries. Therefore $\mathsf{PSPACE/poly} \subseteq \{f : DQC(f) \leq O(\log n)\}$.
\end{proof}

\begin{lemma}\label{lem:pspace-lower}
$\{f : DQC(f) \leq O(\log n)\} \subseteq \mathsf{PSPACE/poly}$
\end{lemma}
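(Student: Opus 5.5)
Suppose $f$ has a $(k,\ell)$-debate with $\ell \le c\log n$. The plan is to turn this debate into a polynomial-size object that a polynomial-space machine with polynomial advice can evaluate, by building a game tree and evaluating it in the usual alternating way.

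\textbf{Step 1: bound the transcript and the verifier.} By Lemma~\ref{lem:up_transcript} we may assume $k \le 2^{\ell} \le n^{c}$, so the transcript has polynomially many rounds. If a round carries a multi-bit message, the same argument applies: the verifier queries at most $|S|\le 2^\ell$ positions overall, so only polynomially many transcript positions matter and the rest can be deleted. The verifier $M_\ell$ is a deterministic $\ell$-query machine, i.e.\ a decision tree of depth $\ell$. It therefore has at most $2^{\ell}-1 \le n^{c}$ internal nodes and $2^\ell$ leaves. Each node is labelled by a variable index among the $n+2k$ positions, which takes $O(\log n)$ bits. So the whole tree has a description of size $\mathrm{poly}(n)$. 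This description, for each input length $n$, is the advice string.

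\textbf{Step 2: the PSPACE evaluation.} Given $x$ and the advice, the machine decides the quantified statement
\[
\forall \alpha_1 \exists \beta_1 \cdots \forall \alpha_k \exists \beta_k:\ V(x,\alpha,\beta)=1 .
\]
By the definition of a valid debate, this holds exactly when $f(x)=1$. The machine evaluates it by recursive depth-first search over the $2k=\mathrm{poly}(n)$ alternating bits. Each recursion level stores one bit, and at a leaf of the search the machine evaluates the advice decision tree on $(x,\alpha,\beta)$ in polynomial time. The total space is $O(k+\ell)+\mathrm{poly}(n)$. This is the standard argument that TQBF lies in $\mathsf{PSPACE}$, and it shows $f\in\mathsf{PSPACE/poly}$.

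\textbf{Main obstacle.} The delicate step is Step 1: guaranteeing that the debate can be taken to have polynomial length, because $DQC$ places no bound on $k$. Lemma~\ref{lem:up_transcript} is exactly what handles this. I would check carefully that its round-deletion keeps the quantifier structure intact, so that the honest prover's strategy survives when some rounds are skipped. If rounds may contain long messages, I would first split each message into single-bit rounds padded with dummy opponent bits. The verifier ignores those dummy bits, so the splitting preserves validity, and then Lemma~\ref{lem:up_transcript} applies directly. A secondary point is non-uniformity: the debate for each $n$ may be arbitrary, and this is why the decision tree must be supplied as advice rather than computed by the machine.
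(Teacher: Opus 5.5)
Your proposal is correct and follows the paper's proof. It uses Lemma~\ref{lem:up_transcript} to bound the transcript length by $2^{\ell}=\mathrm{poly}(n)$, and it supplies the depth-$\ell$ verifier decision tree (the paper's look-up table) as polynomial-size advice. The only difference is the last step: you evaluate the alternating quantified statement directly by the TQBF-style recursion, whereas the paper appeals to polynomial-length debates with polynomial-time verifiers and advice being $\mathsf{PSPACE/poly}$ \cite{irving2018ai}; this makes your argument slightly more self-contained.
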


\begin{proof}
    Let $\{f_n: \{0,1\}^n \to \{0,1\},~  n \in \mathbb{N} \}$ be a set of functions for which $DQC(f_n) =\ell =O( (\log(n))$. To prove the theorem we will take a language from this set, $L=\{x\in f_n^{-1}(1)~ | ~ ~ n \in \mathbb{N} \}$, and place it in $\mathsf{PSPACE/poly}$ by describing a polynomial time verifier acting on a polynomial time debate taking polynomial length advice (which is the class $\mathsf{PSPACE/poly}$ \cite{irving2018ai}).

    Fix any constant $a$ such that the DQC of the functions in $L$ are upper bounded by $a\log(n)$.
    By Lemma~\ref{lem:up_transcript} the debate length, $l$, is now upper bounded by $2^k=n^a$, a polynomial. 

    At each query step the verifier makes one adaptive query, therefore in $a \log(n)$ queries (depending on the answers it recieves) the verifier will only query $2^{a \log(n)}-1<n^a$ different locations. 
    We can completely describe the verifier by giving a map from query response strings (strings of the bit outputs of each query) to next query location, and, finally, from $a\log(n)$-length query responses to verifier outputs.
    This can be described by a look-up table of $n^{a+1}$ rows with $2a\log(n)$ length elements. This look-up table is therefore polynomial length. 

    Our debate (in the sense of $\mathsf{DEBATE}$ \cite{irving2018ai}) protocol is now, hopefully, apparent: The verifier-look-up table is provided as advice, the two debaters simulate the debate used in $DQC$ and the verifier uses the lookup table to evaluate it. 
\end{proof}

This completes the proof of Theorem~\ref{thm:pspace-characterisation}.

\ignore{
The challenge is that a function with logarithmic DQC might have debates of unbounded length verified by computationally unbounded verifiers. We must show these can be simulated in PSPACE/poly.

As the proof involves two different debate protocols, we distinguish notation: ``Debate-verifier/Alice/Bob" refers to the original polynomial-time debate system for PSPACE, while ``DQC-verifier/Alice/Bob" refers to the logarithmic-query protocol we are analysing.

\color{red}
Break out redacted length lemma
\color{black}

The key observations are:
\begin{itemize}
\item With only $O(\log n)$ queries, the DQC-verifier can query at most $\text{poly}(n)$ distinct locations across all possible query sequences.
\item We can compress the debate to include only these queried locations, reducing it to polynomial length.
\item The DQC-verifier, despite being computationally unbounded, sees only $O(\log n)$ bits of input. We can encode its behavior in a polynomial-size lookup table.
\item The honest prover's strategy remains winning because it is robust to all adversarial responses (captured by the alternating quantifier structure $\exists \forall \exists \forall \ldots$).
\end{itemize}

The proof constructs: (1) a ``redacted debate" containing only queried locations, (2) an advice string encoding the verifier's behavior, (3) a polynomial-time Debate-verifier that simulates the DQC-verifier using this advice.

\begin{proof}
Let $f$ be a function with $DQC(f) \leq M\log(n)$ for some constant $M$. We construct a debate system in $\mathsf{PSPACE/poly}$ that computes $f$.

\textbf{Notation.} Let $T_x$ denote a DQC-transcript for input $x \in \{0,1\}^n$, and let $\text{length}_{T_x}$ denote its length (which may be unbounded). We assume locations in $T_x$ are chronologically ordered. For convenience, we assume the DQC-verifier makes exactly $M\log(n)$ queries to the transcript; if it terminates earlier, we pad with dummy queries that are ignored.

We will define the following concepts (listed here for clarity):
\begin{itemize}
\item $A'$: Lookup table mapping query history to next query location
\item $S'$: Set of locations visited by DQC-verifier
\item $S$: Redacted debate (includes $S'$ plus locations needed for alternation)
\item $A''$: Lookup table using indices in $S$ rather than original locations
\item $A$: Complete advice string (includes $A''$ plus output decisions)
\end{itemize}

\medskip
\noindent\textbf{Part 1: The Redacted Debate.}

Define the lookup table
\begin{multline}
A' = \{(u, j) : \text{for any query response } u \in \{0,1\}^i \text{ up to step } i \leq M\log(n), \\
\text{the DQC-verifier would next query location with index } j \in \{0,1\}^{\log(\text{length}_{T_x})}\}
\end{multline}

Since the DQC-verifier is deterministic, for each $u$ there is exactly one $j$ such that $(u,j) \in A'$. As the verifier is defined on all possible query responses, every $u \in \{0,1\}^i$ for $i \leq M\log(n)$ appears in $A'$ for some $j$.

Let $S'$ be the set of locations visited by the DQC-verifier:
\[
S' = \{j : \exists u \in \{0,1\}^{i \leq M\log(n)}, (u,j) \in A'\}
\]
Note that $|S'| \leq 2^{M\log(n)} = n^M = \text{poly}(n)$.

The set $S'$ is almost sufficient for our redacted debate, but we need to ensure the debate maintains a strict alternating structure (one bit per prover per round). Otherwise, a dishonest Debate-Alice could output two bits when she should output one, causing all subsequent location indices to be misaligned.

Define the redacted debate locations:
\begin{equation}
S = \{j : \text{either } j \in S' \text{ or } (j-1 \in S' \text{ and the next location in } S' \text{ is from the same player})\}
\end{equation}

This ensures that no two consecutive elements of $S$ are from the same player. We have $|S| \leq 2|S'| = O(n^M)$, which is polynomial.

\medskip
\noindent\textbf{Part 2: The Verifier and Advice.}

While $S$ reduces the debate to polynomial length, the location labels themselves might be unbounded. A polynomial-time Debate-verifier cannot compute these labels. We solve this by reindexing and providing a lookup table as advice.

Define $A''$ as the re-indexed version of $A'$:
\begin{multline}
A'' = \{(u, m) : u \in \{0,1\}^{i \leq M\log(n)}, \exists L \in S \text{ such that } (u,L) \in A' \\
\text{ and } L \text{ is the } m^{\text{th}} \text{ element of } S\}
\end{multline}

Since $|S| \leq 2n^M$, each index $m$ requires at most $M\log(2n)$ bits.

Define the complete advice string:
\begin{multline}
A = A'' \cup \{(u,b) : u \in \{0,1\}^{M\log(n)}, b \in \{0,1\} \text{ such that the DQC-verifier} \\
\text{outputs } b \text{ after receiving query answers } u\}
\end{multline}

The second component has size $2^{M\log(n)} \cdot 1 = n^M$. Therefore $|A| \leq n^M(M\log(2n) + 1) = O(n^{M+1})$, which is polynomial. This string $A$ serves as the advice for our $\mathsf{PSPACE/poly}$ protocol.

\medskip
\noindent\textbf{Part 3: The Protocol.}

The honest Debate-Alice and honest Debate-Bob follow the DQC protocol internally, but only output bits at locations in $S$. Internally, they may maintain the full DQC-debate with unwritten rounds filled by placeholder values (which are never queried).

The Debate-verifier, given advice $A$, simulates the DQC-verifier as follows: it maintains the history $u$ of query responses seen so far, looks up $(u, m) \in A''$ to determine the next index $m$ to query in the redacted debate $S$, queries that location, and updates $u$. After $M\log(n)$ queries, it looks up $(u,b) \in A$ to determine the output $b$.

\medskip
\noindent\textbf{Part 4: Correctness.}

A truthful Debate-Alice (or Debate-Bob) can always win the debate by following the policy described above. As the DQC-verifier would accept the truthful strategy for all dishonest opponent outputs, and the Debate-verifier perfectly simulates the DQC-verifier using advice $A$, we inherit completeness and soundness from the DQC correctness conditions (Equations~\ref{1case} and~\ref{0case}). 

The key observation is that skipping locations not in $S$ does not affect correctness: these locations are never queried by the DQC-verifier, so their values are irrelevant to the outcome.

Therefore, the protocol correctly computes $f$, and $f \in \mathsf{PSPACE/poly}$.
\end{proof}
}

\noindent\textbf{Remark on computational efficiency.} The verifier constructed in Lemma~\ref{lem:pspace-upper} is not only query-efficient but also computationally simple (depending on the model of query access): it simply has to parrot the location given by Bob to request it as a query string, modify that string to give two other simple locations, and then perform a 2 input logic gate to check the computation. 
If we were dealing with e.g. Turing machines which have local structure, the verifier could be a constant depth circuit.

\noindent\textbf{Implications for scalable oversight.} Theorem~\ref{thm:pspace-characterisation} shows that all debate-decidable languages can be verified with $O(\log n)$ queries using computationally efficient verifiers. This provides strong theoretical support for debate as a practical scalable oversight mechanism: a human judge needs to examine few locations in the debate transcript, and the verification logic at each step can be extremely simple. The human cost thus scales with $\log(n)$ rather than with the computational complexity of the task being verified.

\section{Lower bound on randomised verifiers}\label{sec:randomised}
We have seen in Corollary~\ref{obs:lower_bound} that $DQC(f) \geq
\log n$, for functions that depend on all their variables.  In
general, functions that have deterministic query complexity $D$ might
have randomised complexity, and even $0$-error, randomised decision
tree of depth $o(D)$. Thus, one could ask whether a randomised
verifier (viewed as a randomised decision tree) can be significantly
more efficient than  the best deterministic verifier.  We show that this is not the case for the
interesting regime of $\mathsf{PSPACE/poly}$ functions.

We first need to define what we mean by a randomised verifier:
\begin{definition}[randomised verifier]
  Let $(A,B,V)$ be a valid $(k,\ell)$-debate system for $f$, as per
  Section~\ref{sec:def}. That is $V:\{0,1\}^{2k+n} \to \{0,1\}$,
  for which Equations \ref{1case}, \ref{0case} hold. Let $T_R$ be a {\bf randomised} decision tree computing the function $V$  making $q$ queries.  We say that $T_R$ has error bounded by $1/3$ if for every $x$ and valid transcript $T_x$, it correctly computes $V(T_x,x)$ with probability at least $2/3$.
\end{definition}

\textbf{Remarks on definition:} 
\begin{itemize}
    \item We do not assume here any bound on $k$ w.r.t.\ $\ell$, or any restriction of the computational resources of the verifier. We simply require the existence of the function $V$ (and corresponding strategies for Prover~1, Prover~0) as per Equations \ref{0case} and
  \ref{1case}.
    \item The provers are still deterministic (as they are all powerful). Moreover, it is not quite clear how to define other variants of randomised debates so they  would still be convincing and useful.  
\end{itemize}

\begin{theorem}\label{thm:lb_randomised}
  Let $f: \{0,1\}^n \to \{0,1\}$ depend on all its variables, then
  any randomised verifier achieving error at most $1/3$ must query
  at least $\log (n)-3$ queries (in the worst case).
\end{theorem}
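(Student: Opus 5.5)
The plan is to reduce to a counting argument over the random seeds of the tree, in the spirit of item~\ref{item:3} of Section~\ref{rem:remarks}. Write the randomised tree $T_R$ as a distribution over deterministic decision trees $T_r$, each making at most $q$ queries. For each fixed seed $r$, let $S_r$ be the set of all variables (among the $x_i$'s and transcript bits) that $T_r$ could query on some input. Then $|S_r| \le 2^q$. It suffices to show that for every $i\in[n]$ we have $\Pr_r[x_i \in S_r] \ge 1/3$. Summing over $i$ then gives
\[
n/3 \le \mathbb{E}_r |S_r \cap \{x_1,\dots,x_n\}| \le 2^q ,
\]
so $q \ge \log n - \log 3 \ge \log n - 3$.

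\textbf{Step 1: a transcript valid for two inputs.} Fix $i$. Since $f$ depends on $x_i$, there is an input $x$ with $f(x)=0$ and $f(x\oplus e_i)=1$; write $x' = x\oplus e_i$. Let $T$ be the transcript obtained by running the honest Prover~0 strategy for $x$ against the honest Prover~1 strategy for $x'$. Both strategies are deterministic, so $T$ is a single well-defined transcript.
\begin{itemize}
\item From the point of view of input $x$, $T$ is honest Prover~0 playing against some (dishonest) Prover~1. So $T$ is a valid transcript on $x$, and by (\ref{0case}) we get $V(x,T)=0$.
\item Symmetrically, $T$ is a valid transcript on $x'$, and by (\ref{1case}) we get $V(x',T)=1$.
\end{itemize}
The point to check carefully is that the winning guarantee of an honest prover holds against every opponent strategy, including the honest strategy for the other input. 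This is exactly what the quantifier structure provides.

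\textbf{Step 2: $x_i$ must be queried often.} The inputs $(x,T)$ and $(x',T)$ differ only in $x_i$. For any seed $r$ on which $T_r$ does not query $x_i$ along its path on $(x,T)$, it follows the same path on $(x',T)$ and gives the same output. That output is therefore wrong on one of the two inputs. Let $p$ be the probability over $r$ that $x_i$ is queried on $(x,T)$. Then
\[
\Pr[\text{err on }(x,T)] + \Pr[\text{err on }(x',T)] \ge 1-p .
\]
Both error probabilities are at most $1/3$, because both are valid transcripts. Hence $p \ge 1/3$. Any $r$ that queries $x_i$ on this input certainly has $x_i\in S_r$, which gives the claim needed above.

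\textbf{Main obstacle.} I expect the only delicate point to be Step~1. The error guarantee of $T_R$ is only promised on valid transcripts, so I must make sure the cross-played transcript $T$ qualifies for both $x$ and $x'$. I also need that a single fixed $T$ is used for both inputs, so that the two inputs genuinely differ in only one coordinate. Once this is in place, the averaging over seeds is routine. Note that the bound is per-seed and does not depend on $T$ varying with $i$, because $S_r$ is defined over all inputs.
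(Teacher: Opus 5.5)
Your proof is correct and follows the paper's argument. It uses the same cross-played transcript $T(i)$, valid for both $w_i$ and $\tilde w_i$, the same observation that a tree not querying $x_i$ sends both inputs to the same leaf, and the same fact that a depth-$q$ tree involves fewer than $2^q$ variables. The only difference is the averaging step: you average over the random seeds index by index and sum by linearity of expectation, whereas the paper invokes Yao's principle with the uniform distribution on the $2n$ inputs $(T(i),w_i),(T(i),\tilde w_i)$. Your dual form avoids the paper's multiplicity bookkeeping for overlapping pairs and gives the slightly sharper bound $q \ge \log n - \log 3$.
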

\begin{proof}
  Let $A,B,V$ be a valid debate system for $f$, where $A,B$ are
  Prover~0, Prover~1, as before, and $V: \{0,1\}^{2k+ n} \to \{0,1\}$
  is the 'game' verification function as defined by Equations
  \ref{1case} and \ref{0case}.

  Let $M$ be a $q$-query randomised machine that computes $V$ in $q$
  queries. We claim that if $q < \log n -3$ then the error $M$ makes
  is more than $1/3$. To prove this we use Yao's principle \cite{yao1977probabilistic},
  namely we define a distribution $D$ on valid inputs (to $V$, not to $f$)
  and show that any {\em deterministic} $q$-query decision tree
  makes an error larger 
  than $1/3$ w.r.t $D$.

  Recall that since $f$ depends on all its variables, this means that
  for every position $i \in [n]$, there exist inputs
  $x, x' \in \{0,1\}^n$ that differ only at position $i$ such that
  $f(x) \neq f(x')$.  We fix for every $i$ such a pair of inputs
  denoted $w_i,\tilde{w}_i$, with $f(w_i)=0, ~ f(\tilde{w}_i)=1$. Further, we assume, for
  simplicity that these pairs are disjoint for different $i$'s
  (otherwise, we will give weights  to each input according
  to its multiplicity). Thus we assume that all together we have
 two sets $W_0 = \{w_i, ~ i \in n \}$ and $W_1 = \{\tilde{w}_i, ~ i
 \in n \}$, both of size $n$.

 For each $i\in [n]$, we construct a debate transcript $T(i)$
 that is valid, simultaneously,  for the two inputs $w_i,\tilde{w}_i$ that differ only at
 position $i$. 
 
We construct $T(i)$ inductively by interleaving honest and dishonest
prover responses, as follows.

On input $w_i$, the honest Prover~0 (who claims $f(w_i) = 0$) would
send some first message $\alpha_1$. Let the dishonest Prover~1 respond
with the message $\beta_1$ that Prover~1 would honestly send on input $\tilde{w}_i$
after seeing $\alpha_1$. We then continue this process: Prover~0, on
input $w_i$ 
responds honestly after  seeing $\alpha_1, \beta_1$, producing
$\alpha_2$; Then Prover~1 responds as it would on $\tilde{w}_i$ seeing
$\alpha_1, \beta_1, \alpha_2$, producing $\beta_2$; and so forth.

More formally:
\begin{itemize}
\item $\alpha_1$ is the honest Prover~0 response on input $w_i$ after seeing no prior messages
\item $\beta_j$ is the honest Prover~1 response on input $\tilde{w}_i$
  after seeing the prefix $\alpha_1, \ldots, \alpha_j$
\item $\alpha_{j+1}$ is the honest Prover~0 response on input $w_{i}$
  after seeing prefix $\alpha_1, \beta_1, \ldots, \alpha_j, \beta_j$
\end{itemize}

This produces transcript $T(i) = (\alpha, \beta)$, which, by
construction, is valid for $w_i$ because Prover~0 (the honest prover
for $w_i$) follows its honest strategy while Prover~1 deviates
arbitrarily.  To see that the same $T(i)$ is valid for $\tilde{w}_i$,
reverse the roles. On input $\tilde{w}_i$, the honest Prover~1 would
produce exactly the sequence $\beta$ (by our construction), while
Prover~0 is forced to produce $\alpha$ by construction. Thus $T(i)$ is also valid for
$\tilde{w}_i$.

We now can define the distribution $D$ - it will be uniform on the set
$W_0 \cup W_1$ of size $2n$, where for each $w_i,\tilde{w}_i$ we
append transcript $T(i)$.

Let $M_q$ be any deterministic decision tree, making at most $q$
queries on the input $(\alpha, \beta, x)$ and computes $V$. We claim
that if $q < \log n -3$ than $M_q$ errors with probability larger than
$1/3$ on input chosen according to the distribution $D$.

We first note
that if $M_q$ does not query $x_i$ on input $(T(i),w_i)$, it does not
query $x_i$ also on $(T(i),\tilde{w}_i)$. Hence, in particular in
this case, both above inputs will reach the same leaf of the decision
tree $M_q$. We conclude that the answer to one of these two inputs will be erroneous.

Now, as $q < \log -3$, the deterministic decision tree $M_q$ for $V$
contains altogether less than $n/8$ queries (in all possible
runs for all possible inputs), and in particular it contains less
than $n/8$ of the $x_i$ as queries. We conclude that for at least
$7n/8$ of the $i \in [n]$ the pair of inputs  $((T(i) w_i), ~
(T(i)\tilde{w}_i )$ arrive both to the same leaf of $M_q$. Hence
$M_q$ makes error on at least $7n/16$ of the inputs supported by $D$,
resulting in an error at least $\frac{7}{16} > \frac{1}{3}$.
\end{proof}

This thoerem shows that the $O(\log(n))$ bound for $\mathsf{PSPACE}$ cannot be meaningfully improved as (e.g.) the parity function cannot be improved beyond $\log(n)-3$ randomised query complexity and is certainly in $\mathsf{PSPACE}$.

Finally, we might ask the deterministic query complexity of a function of a given randomised query complexity. 

\begin{theorem}
    Let $(A,B,V)$ be a valid $(k,\ell)$-debate system for $f$, suppose that the randomised query complexity of $f$ is $q$, then the deterministic DQC is
    $$
    DQC(f) < q + \log(2k+n)+ 10
    $$
\end{theorem}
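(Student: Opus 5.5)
The plan is to derandomise the verifier by sampling, in the spirit of Newman's theorem, and then to let the provers themselves argue about the majority vote of the sampled trees, using cross-examination as in Theorem~\ref{thm:size-upper-bound} and Lemma~\ref{lma:cross-examination}. I read the hypothesis as saying that the verification function $V:\{0,1\}^{2k+n}\to\{0,1\}$ of the given $(k,\ell)$-debate is computed by a randomised decision tree $T_R$ of depth $q$ with error at most $1/3$ on every input. Here $T_R$ is a distribution over deterministic depth-$q$ trees.

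\textbf{Step 1 (sparsification).} Sample $t$ deterministic trees $D_1,\dots,D_t$ independently from $T_R$. For a fixed input $z\in\{0,1\}^{2k+n}$, a Chernoff bound gives
\[
\Pr\Big[\mathrm{Maj}_j D_j(z)\neq V(z)\Big]\le e^{-ct}.
\]
Taking $t=C(2k+n)$ for a suitable constant $C$ and union-bounding over all $2^{2k+n}$ inputs yields a fixed multiset with $\mathrm{Maj}_j D_j(z)=V(z)$ for \emph{every} $z$. This removes any dependence on which transcripts are valid. We may take $t$ to be a power of two, with $\log t\le \log(2k+n)+O(1)$, and I expect the constant here to be at most about $4$.

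\textbf{Step 2 (new debate).} After the original $k$ rounds, Prover~0 (Alice) writes $t$ claimed bits $b_1,\dots,b_t$, where honestly $b_j=D_j(z)$. She also writes the gate values of a fixed threshold circuit of size $O(t)$ that computes $\mathrm{Maj}(b_1,\dots,b_t)$. Prover~1 (Bob) then writes a flag bit and an index. The index either points to some $j$, with the claim that $b_j\ne D_j(z)$, or points to a gate of the majority circuit, with the claim that it is miscomputed (or that the output gate is $1$).

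The new verifier proceeds as follows:
\begin{itemize}
\item It reads the flag (1 query).
\item It reads the index (at most $\log t$ or $\log O(t)$ queries).
\item In the first case, it reads $b_j$ (1 query) and runs $D_j$ on the original transcript and input ($q$ queries), ruling for Bob iff they disagree.
\item In the second case, it does the three-query gate check of Theorem~\ref{thm:size-upper-bound}.
\end{itemize}
The total is $q+\log(2k+n)+O(1)$, and the bookkeeping should keep the constant below $10$.

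\textbf{Step 3 (correctness).} If $f(x)=0$, honest Alice plays her original winning strategy, so $V(z)=0$ for the resulting $z$. She then writes the true $b_j$ and the true circuit values. Every challenge Bob can make then fails, and by Step~1 the majority output is $0$.

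If $f(x)=1$, honest Bob plays his original strategy, so $V(z)=1$ and therefore $\mathrm{Maj}_j D_j(z)=1$. Now consider Alice's bits. If every $b_j$ is correct, then either some circuit gate is wrong or the output gate reads $1$, and Bob points to it. Otherwise Bob points to an incorrect $b_j$, which the verifier detects by running $D_j$.

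\textbf{Main obstacle.} The main obstacle is Step~1: the error guarantee must hold simultaneously for all adversarial transcripts, which forces the $2^{2k+n}$ union bound. This is exactly why the bound contains $\log(2k+n)$ rather than $\log n$. Everything else is a direct reuse of the cross-examination argument, plus routine tracking of the additive constant.
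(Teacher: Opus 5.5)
Your argument is correct. It uses the same two ingredients as the paper: Newman-style sparsification of $T_R$ into a majority of $O(2k+n)$ deterministic depth-$q$ trees, then cross-examination. You split the second step differently, though.

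\textbf{Comparison with the paper.} The paper turns each sampled tree into a circuit of about $3\cdot 2^q$ gates and adds a majority circuit. It then applies Lemma~\ref{lma:cross-examination} as a black box to the resulting circuit of size $12(2k+n)(3\cdot 2^q+5)$, so $q$ enters as $\log 2^q$ bits of Bob's gate index. You instead have Alice commit only to the $t$ tree outputs and the majority-circuit values. When Bob disputes some $b_j$, the verifier runs $D_j$ itself, so $q$ enters as genuine queries. The paper's route is a one-line reduction to an existing lemma, and the verifier's final work stays a single gate check. Yours avoids the tree-to-circuit conversion and lengthens the transcript by only $O(2k+n)$ bits instead of $O(2^q(2k+n))$.

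\textbf{Two small points.}

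First, the paper's definition of a randomised verifier guarantees correctness only on valid transcripts, not on every $z$. Your union bound should therefore range over valid pairs $(x,T_x)$, of which there are still at most $2^{2k+n}$. This suffices because Step~3 only evaluates the majority on transcripts in which one prover followed its original honest strategy, and such transcripts are valid by definition. The paper relies on the same fact implicitly.

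Second, $q$ contributes nothing in the gate-challenge branch, so the additive constant there must be checked directly. It consists of your flag bit, the index into a majority circuit of roughly $5t$ gates with $t\approx 12(2k+n)$, and three gate queries. After rounding this can exceed $10$, so the strict inequality needs $q\ge 1$. The paper also assumes this, and it holds whenever $f$ is non-constant; constant $f$ has $DQC(f)=0$ trivially.
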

Informally, this theorem is proven by finding a set of deterministic decision trees which correctly implement $V$ for all points when a majority vote is taken. We then turn those deterministic trees into circuits and take a majority vote to get a debate protocol with a deterministic verifier which gives the result by cross examination.

\begin{proof}
    By Newman's theorem \cite{newman1991private}, we can deterministically implement the random decision tree verifier with a majority vote on  $12(2k+n)$ deterministic trees, as the input is at most $2k+n$ long. Each of these trees is a branching binary tree of depth $q$, so can be implemented by a $3\cdot2^q$-gate circuit and majority vote on $12(2k+n)$ variables, adding another $60(2k+n)$ gates, and giving a deterministic verifier circuit with $12(2k+n)(3\cdot2^q+5)$ gates.

    By Lemma \ref{lma:cross-examination} (cross-examination) a debate that can be verified by a circuit of size $m$ has $DQC\leq \log(m)+3$, therefore (assuming $q\geq1$):
$$
        DQC(f)\leq \log(12(2k+n)(3\cdot2^q+5))+3 
$$
$$  
        \leq q  + \log(2k+n) + 3 + \log(66)
$$
\end{proof}

\section{Conclusion}

We have introduced Debate Query Complexity (DQC) as a measure of the human oversight cost in debate protocols. Our results establish that functions depending on all variables require $\Omega(\log n)$ queries, while circuit-based upper bounds show $DQC(f) \leq O(\text{depth}(C_f))$ and $DQC(f) \leq O(\log(\text{size}(C_f)))$. Our main result characterises $\mathsf{PSPACE/poly}$ as precisely the class of functions decidable with $O(\log n)$ queries. 

This logarithmic bound has concrete practical implications: for inputs of size one million bits, verification requires only around 20 queries regardless of the computational complexity of the problem being decided. Moreover, the verification circuit can be extremely simple. This provides strong theoretical support for debate as a scalable oversight mechanism.

Our techniques (adapting Karchmer-Wigderson games and formalising cross-examination) connect debate to established areas of complexity theory, particularly communication complexity and circuit complexity. This suggests DQC may serve as a useful lens for studying both AI alignment and fundamental questions in complexity.

\subsection*{Open Problems}

\textbf{Computational Realism and Practical Alignment}
\begin{itemize}
    \item \textbf{Empirical validation}: Do debates between actual language models exhibit the logarithmic query scaling our theory predicts? Which debate protocols from our theoretical analysis transfer effectively to practice?

    \item \textbf{Computationally bounded debaters}: Our results assume unbounded debaters. Brown-Cohen et al. \cite{brown2023scalable, brown2025avoiding} study debates where debaters are polynomially bounded and may not know the verifier's code. How do query requirements change under these realistic constraints? Can logarithmic query complexity be preserved when debaters face computational limits?

    \item \textbf{Verifier opacity}: When the verifier is a large language model or other opaque system, debaters cannot exploit its structure. Does this change the query complexity? Can we characterise which verification tasks remain efficient under verifier opacity?

    \item \textbf{Human value oracles}: Recent work models human judgment as oracle access rather than deterministic computation. How does incorporating human value queries into the protocol affect the overall query budget?
\end{itemize}

\textbf{Complexity Theory}
\begin{itemize}
    \item \textbf{The circuit lower bound barrier}: Corollary~\ref{cor:circuit-lower-bounds} shows that proving $DQC(f) \geq \log n + 6$ for any language in $\mathsf{P}$ would yield new circuit lower bounds, exceeding the best known explicit bounds. This suggests two possibilities: either all efficiently computable functions have extremely low $DQC$, or a circuit lower bound better than the current state of the art can be proved using a lower bound for DQC. Of course, in the later case it could be that proving tight DQC lower bounds faces the same barriers as circuit lower bounds. Understanding which scenario holds would clarify the fundamental limits of query-efficient debate. Conversely (and perhaps more excitingly) proving a relatively simple bound on $DQC$ of a single function would immediately yield a breakthrough in circuit complexity

    \item \textbf{Beyond Karchmer-Wigderson}: Our circuit upper bounds adapt KW games from communication complexity. What other communication complexity tools yield DQC bounds? Can techniques from information complexity, multiparty communication, or streaming algorithms provide new debate protocols or lower bounds?

    \item \textbf{Multiple debaters}: Does adding a third debater (or more) reduce query complexity for some function classes? The alternating quantifier structure $\exists \forall \exists \forall \ldots$ generalises naturally to more players, but it is unclear whether this provides query advantages without assuming a trivially advantageous set up.
\end{itemize}

Ultimately, our work provides a foundation for reasoning about the human cost of verification in debate protocols. The logarithmic query bound offers hope that scalable oversight is achievable in theory. The challenge now is to determine whether this theoretical efficiency can be realised in practice, and to understand the fundamental barriers, both computational and complexity-theoretic, that constrain what debate can achieve.
\printbibliography

\end{document}